\documentclass{article}



\usepackage{arxiv}


\usepackage[utf8]{inputenc} 
\usepackage[T1]{fontenc}    
\usepackage{hyperref}       
\usepackage{url}            
\usepackage{booktabs}       
\usepackage{amsfonts}       
\usepackage{nicefrac}       
\usepackage{microtype}      

\usepackage[utf8]{inputenc}
\usepackage[english]{babel}
\usepackage{graphicx} 
\usepackage{subcaption}
\usepackage{amsmath} 
\usepackage{amssymb}  
\usepackage{algorithm}
\usepackage{algorithmic} 
\usepackage{amsthm} 

\newcommand{\mbs}[1]{\boldsymbol{#1}}
\newcommand{\mbf}[1]{\mathbf{#1}}
\newcommand{\wtilde}[1]{\widetilde{#1}}
\newcommand{\what}[1]{\widehat{#1}}

\newcommand{\out}{y}
\newcommand{\ex}{x}
\newcommand{\cvsc}{z}
\newcommand{\cv}{\mbf{\cvsc}}
\newcommand{\pol}{\pi}
\newcommand{\scr}{s}
\newcommand{\n}{n}

\newcommand{\wt}{w}
\newcommand{\pdf}{p}
\newcommand{\setex}{\mathcal{X}}
\newcommand{\setout}{\mathcal{Y}}
\newcommand{\setcv}{\mathcal{Z}}
\newcommand{\setpol}{\Pi}
\newcommand{\quant}{s_{1-\alpha}}
\newcommand{\exalt}{k}
\newcommand{\outpred}{\mu}
\newcommand{\dataset}{\mathcal{D}}
\newcommand{\cdf}{F}
\newcommand{\cdftext}{\textsc{cdf}}


\DeclareMathOperator{\E}{\mathbb{E}}
\DeclareMathOperator{\Prb}{\mathbb{P}}
\DeclareMathOperator*{\argmin}{arg\,min}

\DeclareMathOperator{\T}{\top}

\newtheorem{theorem}{Result}

\title{Learning Robust Decision Policies \\ from Observational Data}

%

\author{%
  Muhammad Osama\\
  \texttt{muhammad.osama@it.uu.se} \\
  \And
  Dave Zachariah \\
  \texttt{dave.zachariah@it.uu.se}\\
  \And
  Peter Stoica \\
  \texttt{peter.stoica@it.uu.se} \\
  \And\\
  Division of System and Control\\
  Department of Information Technology \\
  Uppsala University, Sweden. \\
}

\begin{document}

\maketitle

\begin{abstract}
We address the problem of learning a decision policy from observational data of past decisions in contexts with features and associated outcomes. The past policy maybe unknown and in safety-critical applications, such as medical decision support, it is of interest to learn robust policies that reduce the risk of outcomes with high costs. In this paper, we develop a method for learning policies that reduce tails of the cost distribution at a specified level and, moreover, provide a statistically valid bound on the cost of each decision. These properties are valid under finite samples -- even in scenarios with uneven or no overlap between features for different decisions in the observed data -- by building on recent results in conformal prediction. The performance and statistical properties of the proposed method are illustrated using both real and synthetic data.   
\end{abstract}

\section{Introduction}

We consider data of discrete decisions $\ex \in \setex$ taken in contexts with features $\cv \in \setcv$. The outcome of each decision has an associated cost $y \in \setout$ (or equivalently, negative reward). For instance, we may obtain data from a hospital in which patients with features $\cv$ are given treatment $\ex$ to lower their blood pressure and $\out$ denotes the change of pressure value. The observational data is drawn independently as follows
\begin{equation}
(\ex_i, \out_i, \cv_i) \sim p(\ex, \out, \cv)= p(\cv)\underbrace{p(\ex|\cv)}_{\text{past policy}}p(\out|\ex,\cv), \quad i=1,\dots,n
\label{eq:trainingdistribution}
\end{equation}
where we have used a causal factorization of the unknown data-generating process. The distribution of contexts is described by $p(\cv)$ and 
$p(\ex|\cv)$ summarizes a decision policy which is generally unknown.

Using the $n$ training data points, our goal is to automatically improve upon the past policy. That is, learn a new policy, which is a mapping from features to decisions
$$\pol(\cv) : \setcv \rightarrow \setex,$$
such that the outcome cost $\out$ will tend to be lower than in the past. This policy partitions the feature space $\setcv$ into $|\setex|$ disjoint regions. A sample from the resulting data generating process can then be expressed as
\begin{equation}
(\ex, \out, \cv) \sim p^\pol(\ex,\out,\cv)= p(\cv)1\{  \ex = \pol(\cv) \}p(\out|\ex,\cv),
\label{eq:testdistribution}
\end{equation}
where $1\{ \cdot \}$ is the indicator function. In the treatment regime literature \cite{qian2011performance,zhao2012estimating,zhou2017residual,fu2019robust,tsiatis2019dynamic}, the conventional aim is to minimize the expected cost, viz.
\begin{equation}
\min_{\pol \in \setpol} \; \E^{\pol}[\out] , \quad \text{where} \; \E^{\pol}[\out] \equiv \E\left[ \sum_{\ex \in \setex } \E[ \out | \ex, \cv  ] 1\{  \ex = \pol(\cv) \}  \right] \equiv \sum_{\ex \in \setex} \E\left[   \frac{1\{ \ex = \pol(\cv) \}}{p(\ex|\cv)}y \right],
\label{eq:standardproblem}
\end{equation}
where the last identity follows if features overlap across decisions so that $p(\ex|\cv)>0$ \cite{imbens2015causal}. The optimal policy for this problem is
$\pol(\cv) = \argmin_{\ex \in \setex} \E[ \out | \ex, \cv  ] $ and is determined by the unknown training distribution \eqref{eq:trainingdistribution}. Thus a policy must be learned from $n$ training samples, where a fundamental source of uncertainty about outcomes is uneven feature overlap across decisions \cite{d2017overlap,johansson2019characterization} (see Fig.~\ref{fig: overlap training} for an illustration). Eq. \eqref{eq:standardproblem} is equivalent to an off-policy learning problem in contextual bandit settings using logged data  \cite{langford2008epoch,dudik2011doubly,swaminathan2015counterfactual,joachims2018deep,su2019doubly}, but where the past policy is unknown.

A common approach is to learn a regression model of $\E[ \out | \ex, \cv  ]$, which in the case of binary decisions $\setex=\{ 0,1\}$ and linear models restricts the class of policies to the form $\setpol_\gamma = \{ \pol(\cv) = 1\{ \mbs{\gamma}^{\T} \cv + \gamma_0 > 0 \} \}$. To avoid the sensitivity to regression model misspecification, an alternative approach is to learn a model of $p(\ex|\cv)$ and then approximately solve \eqref{eq:standardproblem} by numerical search over a restricted parametric class of policies $\setpol_\gamma$. In scenarios with highly uneven feature overlap, however, this approach leads to high-variance estimates of $\E^\pol[\out]$, see the analysis in \cite{dudik2011doubly}.

Reliably estimating the expected cost of a policy would yield an important performance certificate in safety-critical applications \cite{tsiatis2019dynamic}. In such applications, however, reducing the prevalence of high costs outcomes is a more robust strategy than reducing the expected cost, even when such tail events have low probability, see Figure~\ref{fig:ccdf} for an illustration. This is especially relevant when the conditional distribution of outcome costs $p(\out|\ex, \cv)$ is skewed or has a dispersion that varies with $\ex$ \cite{wang2018quantile}.

In this paper, we develop a method for learning a  robust policy that
\begin{itemize}
    \item targets the reduction of the tail of the cost distribution $p^{\pol}(\out )$, rather than $\E^\pol[\out]$,
    \item provides a statistically valid limit $\out_\alpha(\cv) \geq \out$ for each decision,
    \item is operational even when there is little feature overlap.
\end{itemize}
Moreover, when the past policy is unknown, the robust policy can be learned using unsupervised techniques, which obviates the need to specify associative models $\what{\E}[ \out | \ex, \cv  ]$ and/or $\what{p}(\ex|\cv)$. The method is demonstrated using both real and synthetic data.
 
\begin{figure}
    \centering
    \begin{subfigure}{0.45\linewidth}
    \includegraphics[width=0.9\linewidth]{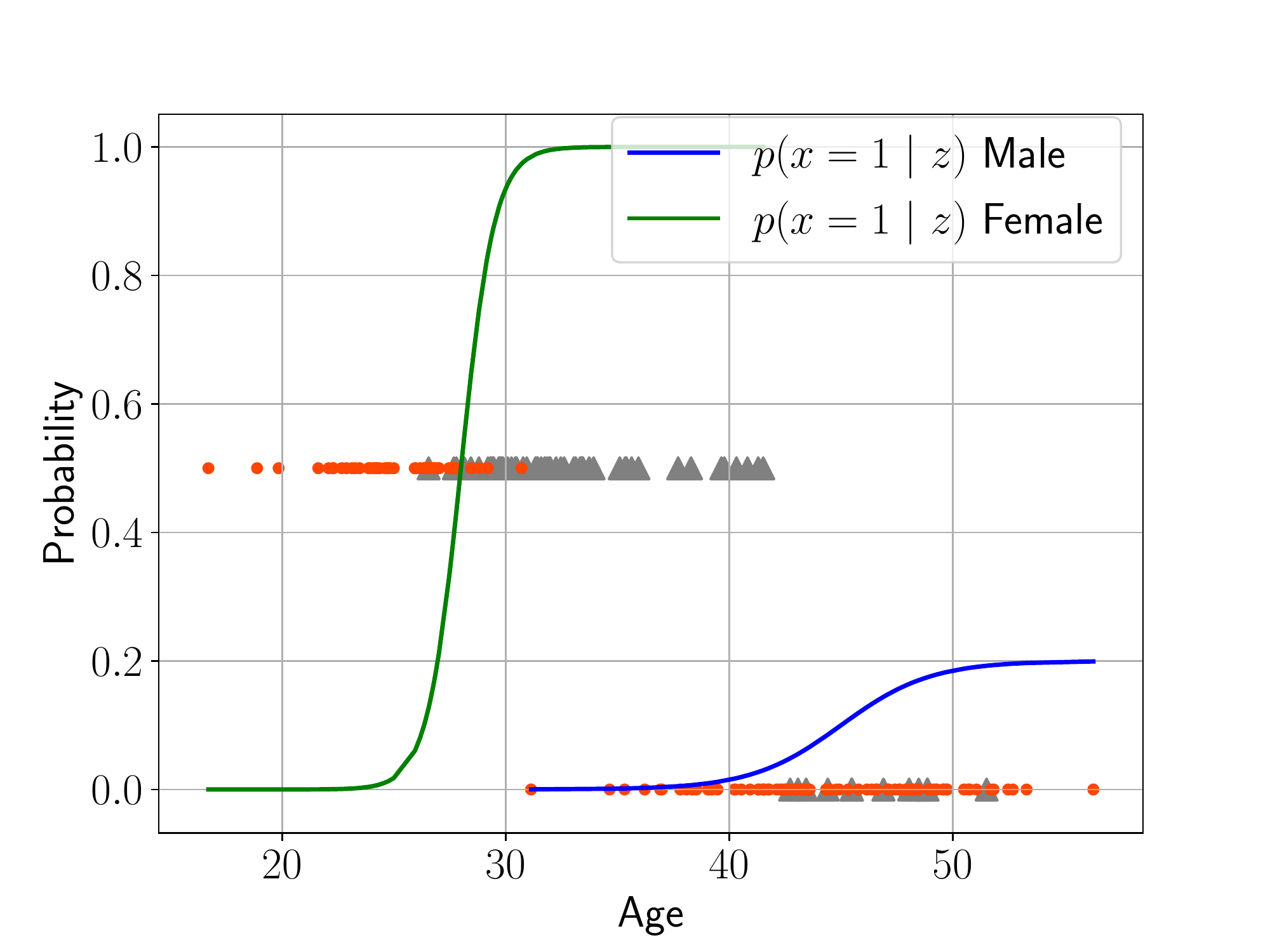}
    \caption{Training data and past policy}
    \label{fig: overlap training}
    \end{subfigure}
    \begin{subfigure}{0.45\linewidth}
    \includegraphics[width=0.9\linewidth]{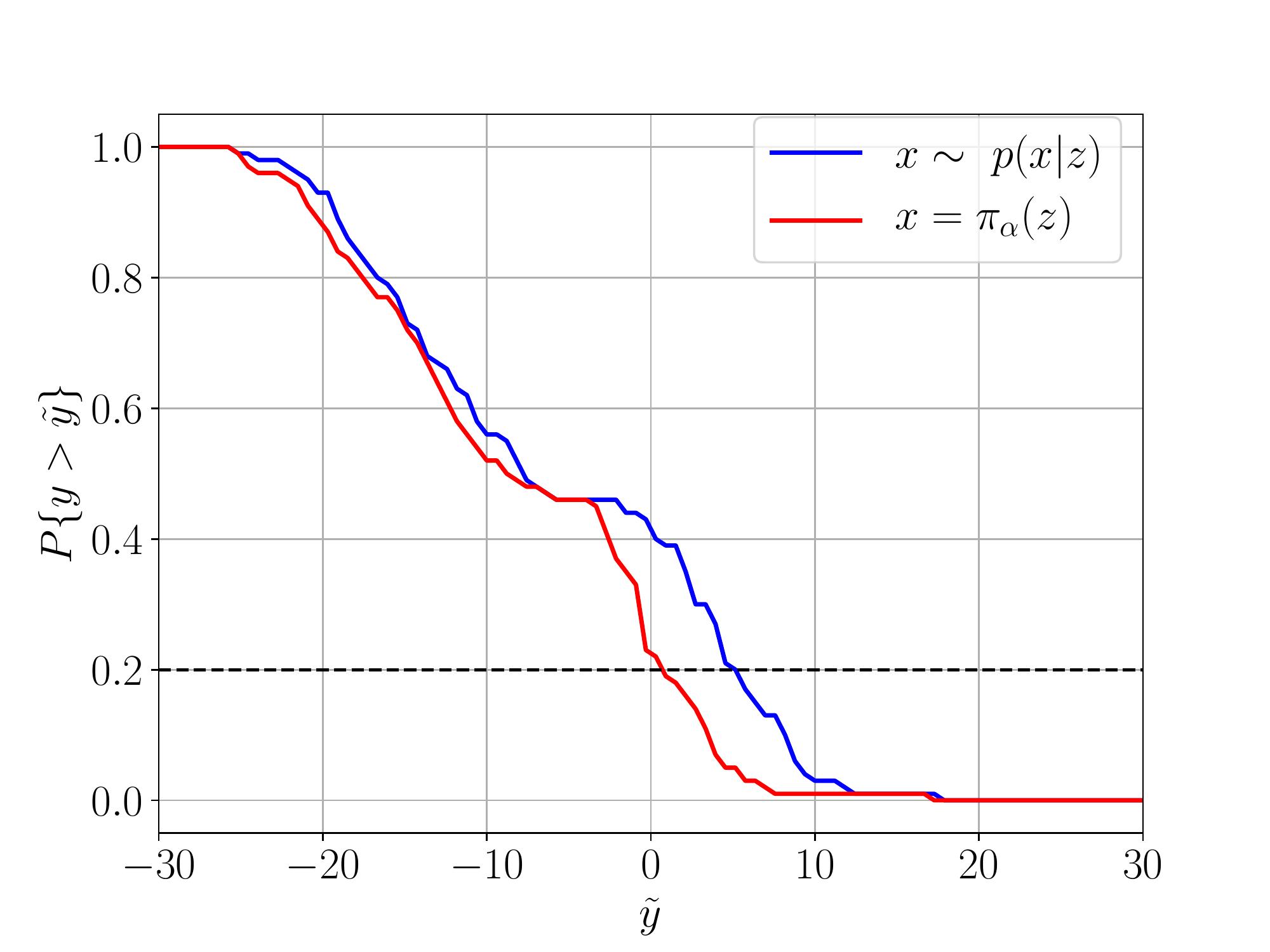}
    \caption{Cost distribution}
    \label{fig:ccdf}
    \end{subfigure}
    \caption{Example of synthetic patient data with features $\cv \in \{ \texttt{age}, \texttt{gender} \}$ and decisions $\ex\in \{ 0,1\}$ on whether or not to assign a treatment against high blood pressure. The outcome cost $\out \in [-30,30]$ here is the change in blood pressure. (a) Training data  $(\ex_i, \cv_i)$  where treatment and no treatment are denoted by $\triangle$ and $\bullet$, respectively. The example illustrates a past policy with highly uneven feature overlap, such that $p(\ex=1| \cv)$ approaches  0 for younger males and 1 for older women, respectively. (b) Probability that a change of blood pressure $\out$ exceeds a level $\wtilde{\out}$, when $\ex$ is assigned according to past policy $p(\ex |\cv)$ vs. a proposed robust policy $\pol_\alpha(\cv)$ that targets lowering the tail costs at the $\alpha=20\%$ level (dashed line).}
    \label{fig:overlap}
\end{figure}

\section{Problem formulation}

We consider a policy $\pol(\cv)$ to be robust if it can reduce the tail costs at a specified level $\alpha$ as compared to the past policy -- even for finite $n$ and highly uneven feature overlap. We define the $\alpha$-tail as all $\out_\alpha$ for which the probability $\Prb^\pol\{ \out \leq \out_{\alpha}\}$ is greater than or equal to $1-\alpha$. An \emph{optimal} robust policy therefore minimizes the $(1-\alpha)$-quantile of the cost, viz. a solution to
\begin{equation}
\min_{\pol \in \setpol} \; \inf\Big\{ \: \out_\alpha \in \setout : \Prb^\pol\{ \out \leq \out_{\alpha}\}   \geq  1-\alpha  \:  \Big\},
\label{eq:robustlearning}
\end{equation}
Since a learned policy $\pi(\cv; \dataset_n)$ is a function of the training data $\dataset_n = \{ (\ex_i, \out_i, \cv_i) \}^n_{i=1}$, the probability is also defined over all $n$ i.i.d. training points. 

The problem we consider is to learn a policy in a class $\setpol_\alpha$ that approximately solves \eqref{eq:robustlearning} and certifies each decision by a limit $\out_\alpha( \cv) \geq \out$ that holds with a probability of at least $(1-\alpha)$ for finite $n$ and highly uneven feature overlap.

\section{Learning Method}
\label{sec:method}

Since the cumulative distribution function (\cdftext{}) in \eqref{eq:robustlearning} is unknown for a given policy, it is a challenging task  to find the minimum $\out_{\alpha}$  which satisfies the $(1-\alpha)$ constraint. We propose to restrict the policies to a class $\setpol_\alpha$, constructed as follows: Suppose there exists a feature-specific limit $\out_{\alpha}(\ex, \cv)$ for a given decision $\ex \in \setex$, such that $\Prb^\ex\{ \out \leq \out_{\alpha}(\ex, \cv)\}$ is no less than $1-\alpha$. Then we define $\setpol_\alpha$ as all policies $\pol(\cv)$ that select $\ex$ with the minimum cost limit at the specified level $\alpha$. That is, a class of robust policies
\begin{equation}
\boxed{\setpol_\alpha = \left\{ \pol(\cv) = \argmin_{\ex \in  \setex} \: \out_\alpha(\ex,\cv) \; : \; \Prb^x\big\{ \out \leq \out_\alpha(\ex, \cv) \big\} \geq 1-\alpha, \: \forall \ex \in \setex \right\}}
\label{eq:robustpolicy}
\end{equation}
Learning a policy in $\setpol_\alpha$ therefore amounts to using $\dataset_n$ to learn a set of functions $\{ \out_\alpha(\ex,\cv) \}_{\ex \in \setex}$ that satisfy the constraints. Figure \ref{fig:comp intervals} illustrates $\out_{\alpha}(\ex,\cv)$ constructed using the method described below, for a binary decision variable $\ex$. 

\emph{Remark:} If there is a tie among $\{ \out_\alpha(\ex,\cv) \}_{\ex \in \setex}$, the policy can randomly draw $\ex$ from the minimizers. If the limits are non-informative, $\out_\alpha(\ex,\cv) \equiv \max(\setout)$, the method will indicate that the data is not sufficiently informative for reliable cost-reducing decisions. See Figure~\ref{fig:comp intervals} for regions in feature space where there is no data about outcomes for treated younger males and untreated older women; consequently $\out_\alpha(\ex,\cv) = \max(\setout)$ for such pairs of features and decisions.

\begin{figure}[t!]
    \centering
    \begin{subfigure}{0.45\linewidth}
    \includegraphics[width=0.9\linewidth]{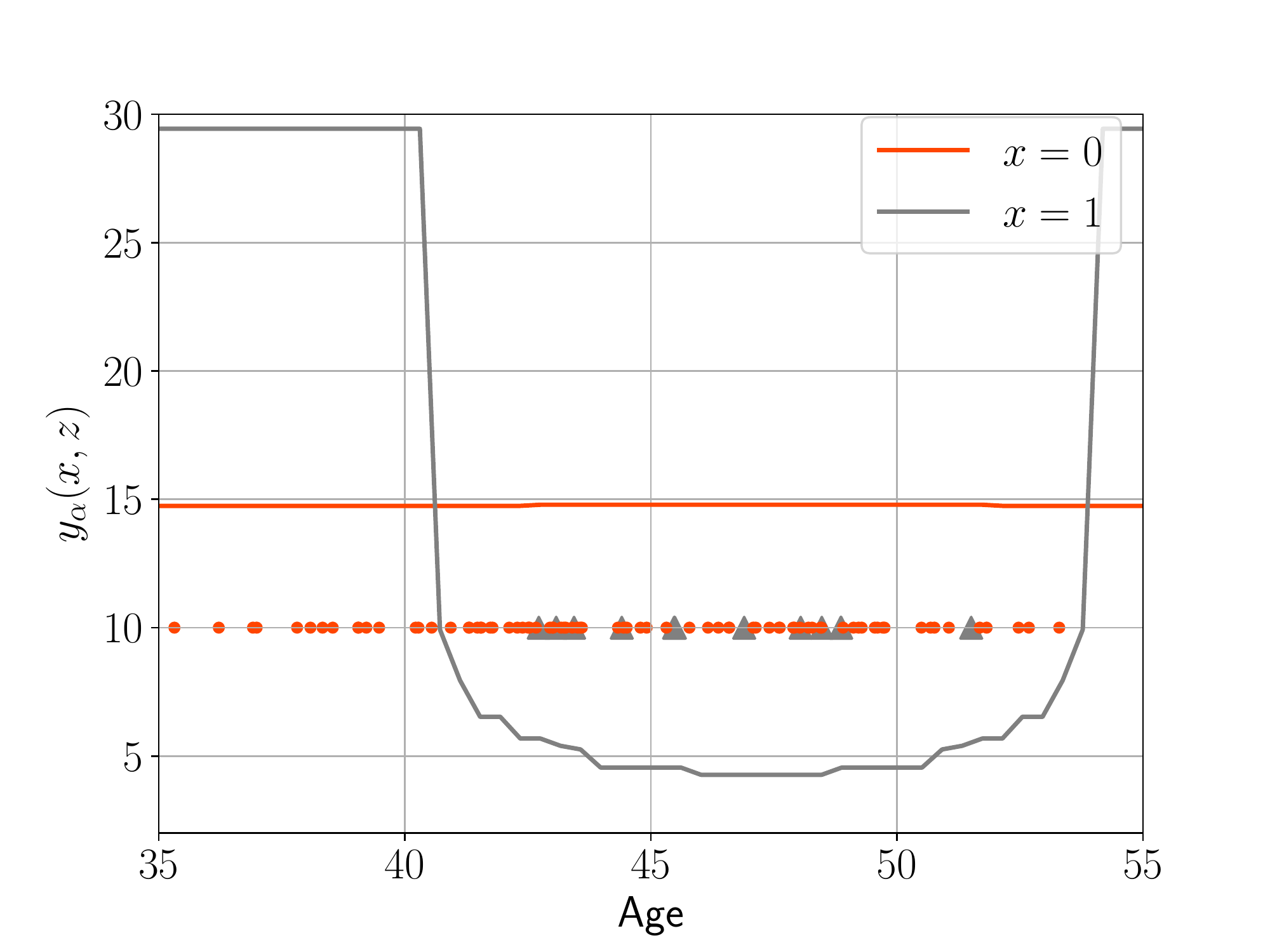}
    \caption{Male}
    \label{fig: PI x=0}
    \end{subfigure}
    \begin{subfigure}{0.45\linewidth}
    \includegraphics[width=0.9\linewidth]{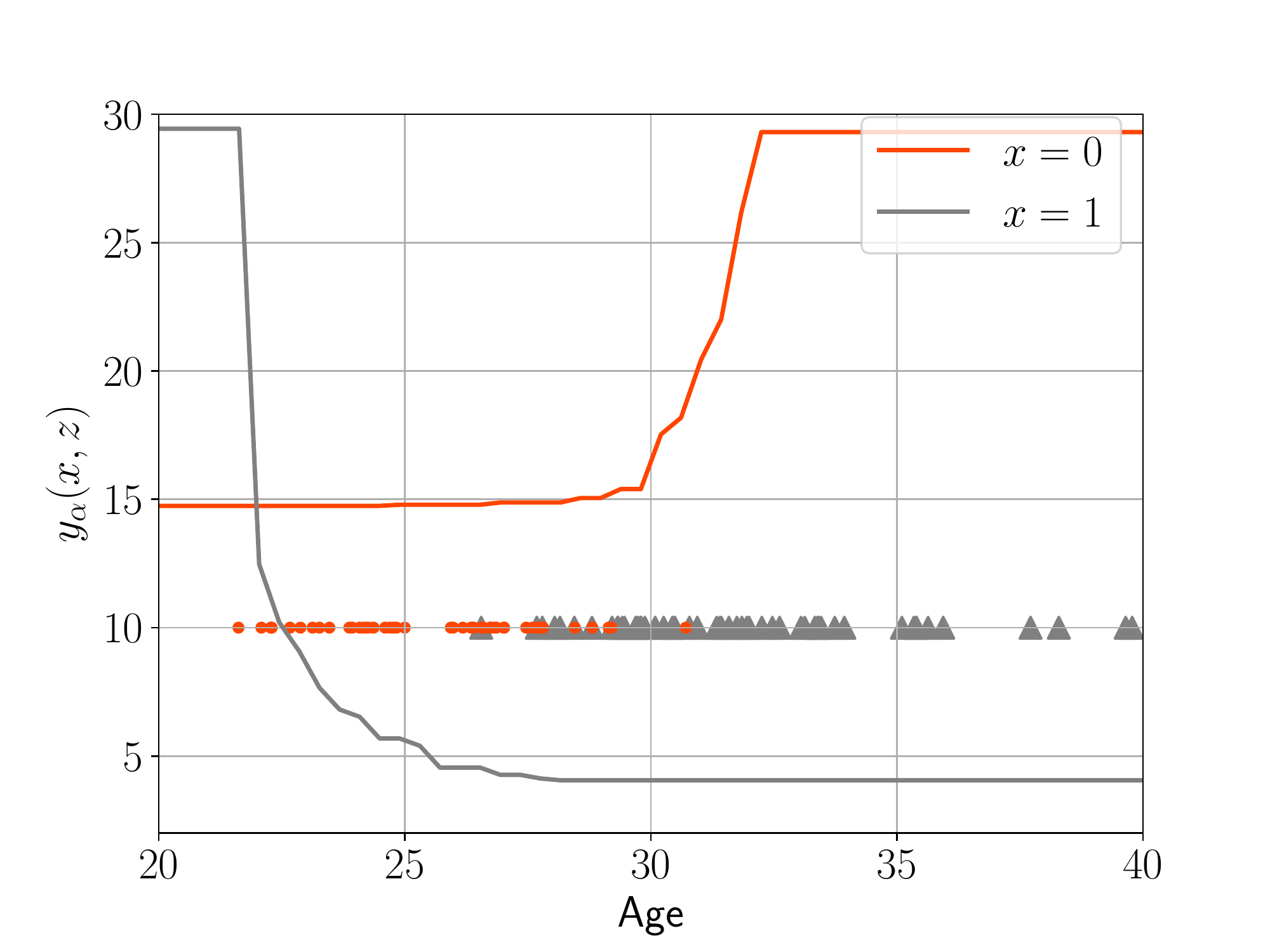}
    \caption{Female}
    \label{fig: PI x=1}
    \end{subfigure}
    \caption{Synthetic patient data from Figure~\ref{fig:overlap} along with limit $\out_{\alpha}(\ex,\cv)$ such that $\Prb^x\{ \out \leq \out_{\alpha}(\ex,\cv) \}$ is no lower than $1-\alpha = 80\%$. The functions are learned using the method described in Sections~\ref{sec:conformal} and \ref{sec:unsupervised}. The training data provides evidence that treating (a) males in ages 41-53 years and (b) females in ages 22-40 years, yields the lowest tail costs. A robust policy $\pol_\alpha(\cv)$ in \eqref{eq:robustpolicy} selects decisions in $\setex$ which yields the minimum $\out_{\alpha}(\ex,\cv)$ and therefore targets the reduction of the tail of $p^\pol(\out)$ (Fig.~\ref{fig:ccdf}). For younger males, however, data on treatment is unavailable and the limit becomes uninformative,  $\out_{\alpha}(\ex,\cv) = \max(\setout)$. Conversely, data on untreated older female is unavailable.}
    \label{fig:comp intervals}
\end{figure}

\subsection{Statistically valid limits}
\label{sec:conformal}

To construct feature-specific limits $\out_{\alpha}(\ex, \cv)$ that satisfy the constraint in \eqref{eq:robustpolicy}, we leverage recent results developed using the conformal prediction framework \cite{vovk2005algorithmic,lei2018distribution,barber2019conformal}. We begin by quantifying the divergence of a sample $(\ex, \out, \cv)$ in \eqref{eq:testdistribution} from those in $\dataset_n$, using the residual 
\begin{equation}
\scr(\ex, \out, \cv) = |\out - \outpred(\ex,\out,\cv)|  \geq 0,
\label{eq:nonconformal}
\end{equation}
where $\outpred(\ex,\out,\cv)$ is any predictor of the cost fitted using $\dataset_n \cup (\ex,\out,\cv)$. Then $\scr(\ex,\out,\cv)$ can be viewed as a random non-conformity score with a \cdftext{} $F(\scr)$ and quantile
\begin{equation}
\quant(F) = \inf\{ s : F(s)  \geq  1-\alpha \}
\label{eq:quant}
\end{equation}

\begin{theorem}[Finite-sample validity]
For a given level $\alpha$ and context $\cv$, construct a set of probability weights
\begin{equation}
p_{\exalt}(\ex_i, \cv_i ) \triangleq \frac{\wt_\exalt(\ex_i,\cv_i)}{\sum^n_{j=1}\wt_\exalt(\ex_j,\cv_j) + \wt_\exalt(\ex,\cv)}, \quad \text{where} \;
\wt_\exalt(\ex,\cv) \triangleq \frac{1 \{  \ex = \exalt  \}p(\cv)}{p(\cv|\ex)p(\ex)},
\label{eq:weights}
\end{equation}
for $\exalt \in \setex$ and define an empirical cdf for the residuals
\begin{equation}
    \what{\cdf}_\ex(\scr) = \sum^n_{i=1}  p_\ex(\ex_i, \cv_i) 1\{ \scr \geq \scr_i \} + p_\ex(\ex, \cv) 1\{ \scr \geq \scr(\ex, \out, \cv) \}, 
\label{eq:cdf}
\end{equation}
where $\scr_i = |\out_i - \outpred(\ex_i,\out_i,\cv_i)|$.
Then
\begin{equation}
  \out_\alpha(\ex, \cv) \triangleq \max \left\{\out \in \setout : \scr(\ex, \out, \cv)\leq \quant(\what{F}_\ex) \right\}, 
\label{eq:max conformal interval}
\end{equation}
satisfies the probabilistic constraint $\Prb^x\{ \out \leq \out_\alpha(\ex, \cv)\} \geq 1 - \alpha$ in \eqref{eq:robustpolicy}.
\end{theorem}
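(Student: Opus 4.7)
The plan is to recognize this statement as an instance of weighted conformal prediction under covariate shift, in the style of Tibshirani, Foygel Barber, Cand\`es, and Ramdas (2019). The target distribution in the probability $\Prb^x\{\out \leq \out_\alpha(\ex,\cv)\}$ is one in which $\ex = \exalt$ is fixed and $\cv \sim p(\cv)$, whereas the training data is drawn from $p(\cv)p(\ex|\cv)p(\out|\ex,\cv)$. The first thing I would do is compute the likelihood ratio between these two distributions on $(\ex,\cv)$: it is precisely $w_\exalt(\ex,\cv) = 1\{\ex=\exalt\}p(\cv)/(p(\cv|\ex)p(\ex))$, so the $p_\exalt$ in \eqref{eq:weights} are the normalized importance weights on the augmented set of $n+1$ points (the training data together with the test point).

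Next I would exploit the symmetry of the residual score. Because $\outpred$ is fit on $\dataset_n \cup (\ex,\out,\cv)$ as an unordered set and evaluated at each member in turn, the map $(\ex_i,\out_i,\cv_i)_{i=1}^{n+1} \mapsto (\scr_i)_{i=1}^{n+1}$ is permutation-equivariant. Combined with the i.i.d.\ structure of the training data and the test point, this yields \emph{weighted exchangeability}: conditional on the multiset $\{\scr_1,\dots,\scr_n,\scr(\ex,\out,\cv)\}$, the probability that the test residual equals the $i$-th order statistic is proportional to $\wt_\exalt$ evaluated at the corresponding input pair, i.e.\ equals $p_\exalt$.

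Third, I would translate the tail statement about the rank of $\scr(\ex,\out,\cv)$ into the desired coverage. The CDF $\what\cdf_\ex$ in \eqref{eq:cdf} is exactly the $p_\exalt$-weighted empirical CDF of the $n+1$ scores, and the previous step shows that under the target distribution $\Prb^x\{\scr(\ex,\out,\cv) \leq \quant(\what\cdf_\ex)\} \geq 1-\alpha$. Finally, since $\out_\alpha(\ex,\cv)$ is defined as the largest $\out$ for which $\scr(\ex,\out,\cv) \leq \quant(\what\cdf_\ex)$, the event inside the probability implies $\out \leq \out_\alpha(\ex,\cv)$, which is the stated bound.

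The main obstacle is the weighted-exchangeability step: one must be careful that the weights depend only on $(\ex_i,\cv_i)$ and not on $\out_i$ (so that the reweighting is a valid Radon--Nikodym derivative on the $(\ex,\cv)$-marginal only), and that $\outpred$ is a symmetric function of its inputs so that permuting indices leaves the joint law of the scores invariant. Both conditions are built into the setup of \eqref{eq:weights} and \eqref{eq:nonconformal}; verifying them, and then converting the rank statement into a quantile statement via \eqref{eq:quant}, is the technical core of the argument, while the remaining steps reduce to the definitions of $\what\cdf_\ex$ and $\out_\alpha(\ex,\cv)$.
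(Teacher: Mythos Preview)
Your proposal is correct and follows essentially the same approach as the paper: the paper's own proof is a one-liner that rewrites $\wt_\exalt(\ex,\cv)$ as the likelihood ratio $q_\exalt(\ex|\cv)p(\cv)/(p(\ex|\cv)p(\cv))$ with $q_\exalt(\ex|\cv)=1\{\ex=\exalt\}$ and then invokes Corollary~1 of Barber et al.\ (2019) to conclude that the set in \eqref{eq:max conformal interval} covers $\out$ with probability at least $1-\alpha$. Your write-up simply unpacks the content of that corollary (likelihood ratio, weighted exchangeability, weighted empirical CDF, quantile coverage) and makes explicit the final step that coverage implies $\out\leq\out_\alpha(\ex,\cv)$.
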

\begin{proof}
By expressing $\wt_\exalt(\ex,\cv) \equiv \frac{q_k(\ex|\cv)p(\cv)}{p(\ex|\cv)p(\cv)}$, where $q_k(\ex|\cv) = 1\{ \ex = k \}$ it follows from  \cite[corr.~1]{barber2019conformal} that the set in \eqref{eq:max conformal interval} will cover $\out$ with a probability of at least $1-\alpha$.
\end{proof}

Computing $\out_\alpha(\ex, \cv)$  requires a search of the maximum value in the set \eqref{eq:max conformal interval}, which can be implemented efficiently using interval halving. Each evaluation point in the set, however, requires re-fitting $\outpred(\ex,\out,\cv)$ to $\dataset_n \cup (\ex, \out, \cv)$ in \eqref{eq:nonconformal}. For an efficient computation of \eqref{eq:max conformal interval}, we therefore consider the locally weighted average of costs, i.e.,
\begin{equation}
\outpred(\ex,\out,\cv) = \sum^n_{i=1}  p_\ex(\ex_i, \cv_i) \out_i + p_\ex(\ex, \cv) \out
\end{equation}
which is linear in $\out$. This choice then defines a policy in $\setpol_\alpha$ and is illustrated in Figures~\ref{fig:comp pol mal} and \ref{fig:comp pol fem}. Each decision of the policy can then be certified by a limit $\out_\alpha(\cv) \geq \out$ obtained by setting $\out_\alpha(\cv) = \out_\alpha(\pol(\cv),\cv)$ in \eqref{eq:max conformal interval} and the probability of exceeding the limit is bounded by $\alpha$. For the sake of clarity, the computation of $\out_\alpha(\cv)$ is summarized in Algorithm~\ref{algo:robust policy}.

An important property of \eqref{eq:max conformal interval} is that it is statistically valid also for highly uneven feature overlap. As $p(\cv|\ex)$ approaches $0$ for a given $\ex$, the probability weights in \eqref{eq:weights} concentrate so that $p_\ex(\ex, \cv) \rightarrow 1$ in \eqref{eq:cdf}.  Consequently, $\out_\alpha(\ex, \cv)$ converges to $\max(\setout)$ so that the proposed robust policy avoids decisions $\ex$ in contexts $\cv$ for which there is little or no training data.

\begin{algorithm}[h!]
\caption{Robust policy}
\label{algo:robust policy}
\begin{algorithmic}[1]
\STATE \textbf{Input}: Training data $\dataset_n$,  level $\alpha$ and feature $\cv$
\FOR{$\ex\in\setex$}
\STATE Compute $\{ p_\ex(\ex_i,\cv_i) \}^n_{i=1}$ in \eqref{eq:weights}
\STATE Set $\outpred_0 = \sum_{i=1}^{\n} p_\ex(\ex_i, \cv_i) \out_i$
\STATE Set $\setout_\alpha := \emptyset$
\FOR{$\out\in\setout$}
\STATE Predictor $\outpred(\ex, \out, \cv) := \outpred_0 + p_\ex(\ex, \cv) \out$
\STATE Score $\scr(\ex,\out,\cv) :=|\out-\mu(\ex,\out,\cv)|$
\STATE \cdftext{} $\what{\cdf}_\ex(\scr)$ in \eqref{eq:cdf}
 \IF{$\scr(\ex,\out,\cv)\leq\quant(\what{F}_\ex)$}
 \STATE $\setout_\alpha := \setout_\alpha \cup \{ \out \}$
 \ENDIF
\ENDFOR
\STATE $\out_\alpha(\ex, \cv) = \max(\setout_\alpha)$
\ENDFOR
\STATE \textbf{Output}: $\pol(\cv)~=~\argmin_{\ex}~\out_{\alpha}(\ex,\cv)$ and $\out_\alpha(\cv) = \out_\alpha(\pol(\cv), \cv)$
\end{algorithmic}
\end{algorithm}

\subsection{Unsupervised learning of weights}
\label{sec:unsupervised}

In randomized control trials, and other controlled experiments, the weights in \eqref{eq:weights} are given by a known past policy. In the general case, however, $\wt_\exalt(\ex, \cv)$ must be learned from training data. This is effectively an unsupervised learning problem which therefore circuments the need for specifying associative models of $\E[\out|\ex, \cv]$ (regression) or $p(\ex|\cv)$ (propensity score). 

The categorical distribution of past decisions, $p(\ex)$, is readily modeled as $\what{p}(\ex=k) = n^{-1} \sum^n_{i=1} 1\{ \ex_i=\exalt \}$ using $\dataset_n$. The conditional feature distribution $p(\cv|\ex=\exalt)$ can in turn be modelled by a flexible generative model, e.g. Gaussian mixture models or multinoulli models. The accuracy of the learned generative model $\what{p}(\cv|\ex=\exalt)$ can then be assessed using model validation methods, e.g. \cite{lindholm2019data}. If the training data contains high-dimensional covariates, we propose constructing features $\cv$ using dimension-reduction methods, such as autoencoders \cite{bengio2013representation,kingma2019vae,makhzani2015adversarial,tolstikhin2017wasserstein}. The weights in \eqref{eq:weights} are learned via $\what{p}(\ex)$ and $\what{p}(\cv|\ex)$, and using $\what{p}(\cv) = \sum_{\ex  \in \setex} \what{p}(\cv | \ex) \what{p}(\ex)$.

\emph{Remark:} If a validated propensity score model already exists, one can simply use the equivalent form $\wt_\exalt(\ex, \cv) = 1\{ \ex = k \}/\what{p}(\ex|\cv)$.

\section{Numerical experiments}

We study the statistical properties of policies in the robust class $\setpol_\alpha$, which we denote $\pol_\alpha(\cv)$. To illustrate some key differences between a mean-optimal policy \eqref{eq:standardproblem} and a robust policy, we first consider a well-specified scenario in which the mean-optimal policy belongs to a given class $\setpol_\gamma$. Subsequently, we study a scenario with misspecified models using real training data.

\subsection{Synthetic data}

We consider a scenario in which patients are assigned treatments to reduce their blood pressure. We create a synthetic dataset, drawing  $\n=200$ data points from the training distribution \eqref{eq:trainingdistribution} where features $\cv = [\cvsc_{1}, \cvsc_{2}]^{\T}$ represent age $\cvsc_{1} \in \mathbb{R}$ and gender $\cvsc_{2}\in\{0,1\}$ ($1$ for females and $0$ for males). The feature distribution for the population of patients $p(\cv)$ is specified as
\begin{equation} \label{eq:synthetic cov}
    \pdf(\cvsc_{1}|\cvsc_{2}) = \cvsc_{2}\times \mathcal{N}(30,~5)+ (1-\cvsc_{2})\times\mathcal{N}(45,~5) \quad \text{and} \quad p(\cvsc_2) \equiv 0. 5
\end{equation}
The treatment decision $\ex\in\{0,1\}$ is assigned based on a past policy which we specify by the probability
\begin{equation}\label{eq:synthetic ex}
    \pdf(\ex=1|\cv)=
       \cvsc_{2} \times 0.95~f\left(-\frac{\cvsc_{1}-20}{6}\right) + (1-\cvsc_{2}) \times 0.20~f\left(-\frac{\cvsc_{1}-45}{2}\right),
\end{equation}
where $f(a)~=~1/(1+\exp(-a))$ is the sigmoid function. See Figures \ref{fig:comp pol mal} and \ref{fig:comp pol fem} for all illustration. While the assignment mechanism is not necessarily realistic, we use it to illustrate the relevant case of uneven feature overlap. Finally, the change in blood pressure $\out$ is drawn randomly as
\begin{equation} \label{eq:synthetic out}
    p(\out|\ex, \cv) = \ex \times \mathcal{N}\left(\mbf{e}^{\T}_1 \cv - 45,~ \sigma^2_{1}\right) + (1-\ex)\times \mathcal{N}\left(\mbf{e}^{\T}_1 \cv - 46,~ \sigma^2_{0}\right),  
\end{equation}
where $\sigma_{1}~=~0.2$ and $\sigma_{0}~=~20$. While the expected cost for the untreated group is lower than for the treated group, we consider the untreated patients to have more heterogeneous outcomes, so that the dispersion is higher. That is, $\E[\out|0,\cv] < \E[\out|1,\cv]$ while $\sigma_0 > \sigma_1$.

Since the past policy is unknown,  we learn  weights \eqref{eq:weights} for $\pol_{\alpha}(\cv)$ in an unsupervised manner, using $\what{\pdf}(\cv|\ex) = \what{\pdf}(\cvsc_1|\cvsc_2, \ex)\what{\pdf}(\cvsc_{2}|\ex)$, where $\what{\pdf}(\cvsc_1|\cvsc_2, \ex)$ is a misspecified Gaussian model and $\what{\pdf}(\cvsc_{2}|\ex)$ is a Bernoulli model. We let $\alpha = 20\%$. As a baseline comparison, we consider minimizing the expected cost \eqref{eq:standardproblem} for a linear policy class $\setpol_\gamma$. Since 
$\E[\out|\ex,\cv]$ is a linear function in $\cv$, this is a well-specified scenario in which the mean-optimal policy belongs to $\setpol_\gamma$. We fit a correct linear model of the conditional mean and denote the resulting policy by $\pol_\gamma(\cv)$.

\begin{figure}[t!]
    \centering
    \begin{subfigure}{0.45\linewidth}
    \includegraphics[width=0.9\linewidth]{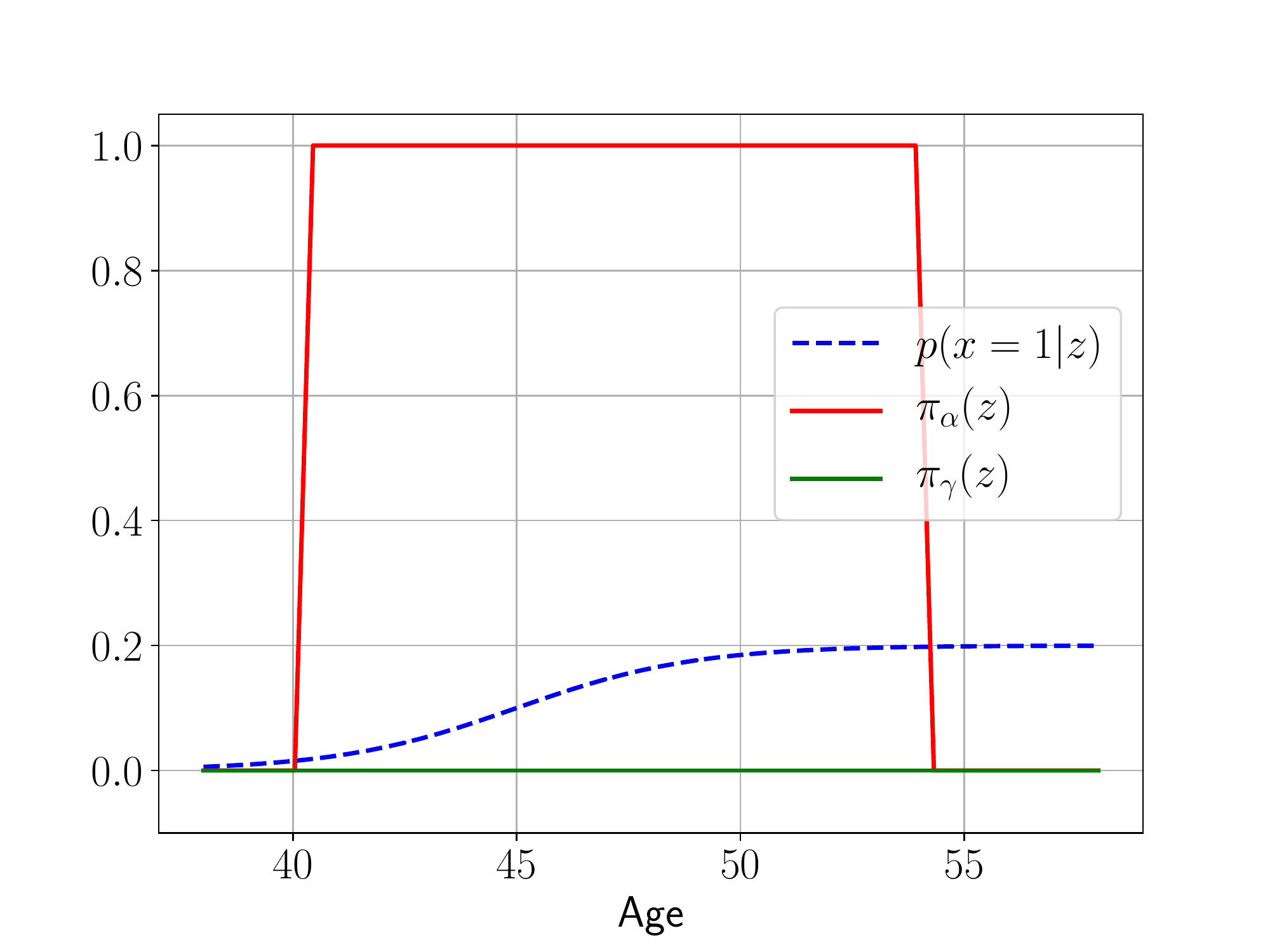}
    \caption{Policy for males}
    \label{fig:comp pol mal}
    \end{subfigure}
    \begin{subfigure}{0.45\linewidth}
    \includegraphics[width=0.9\linewidth]{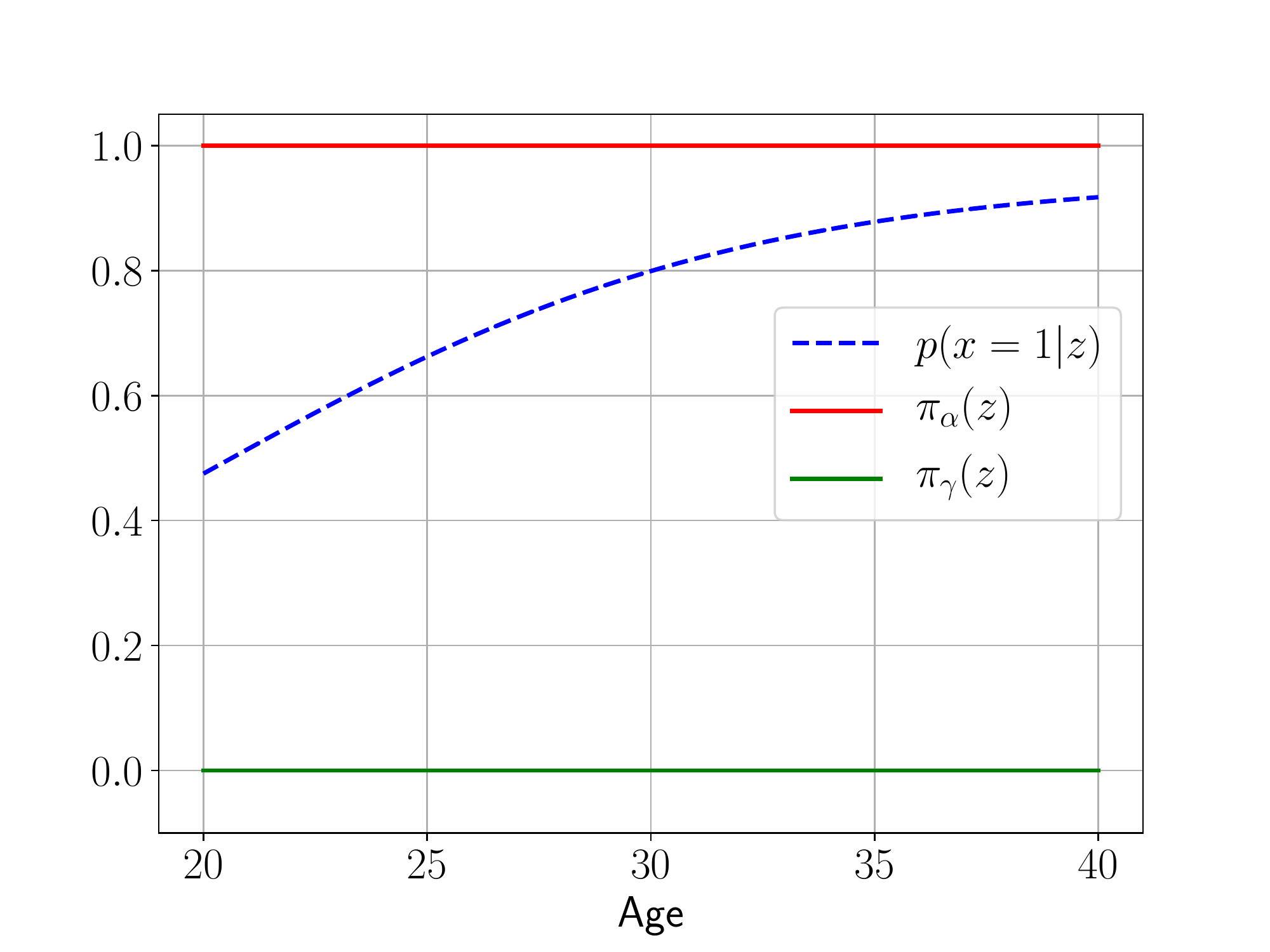}
    \caption{Policy for females}
    \label{fig:comp pol fem}
    \end{subfigure}
    \begin{subfigure}{0.45\linewidth}
    \includegraphics[width=0.9\linewidth]{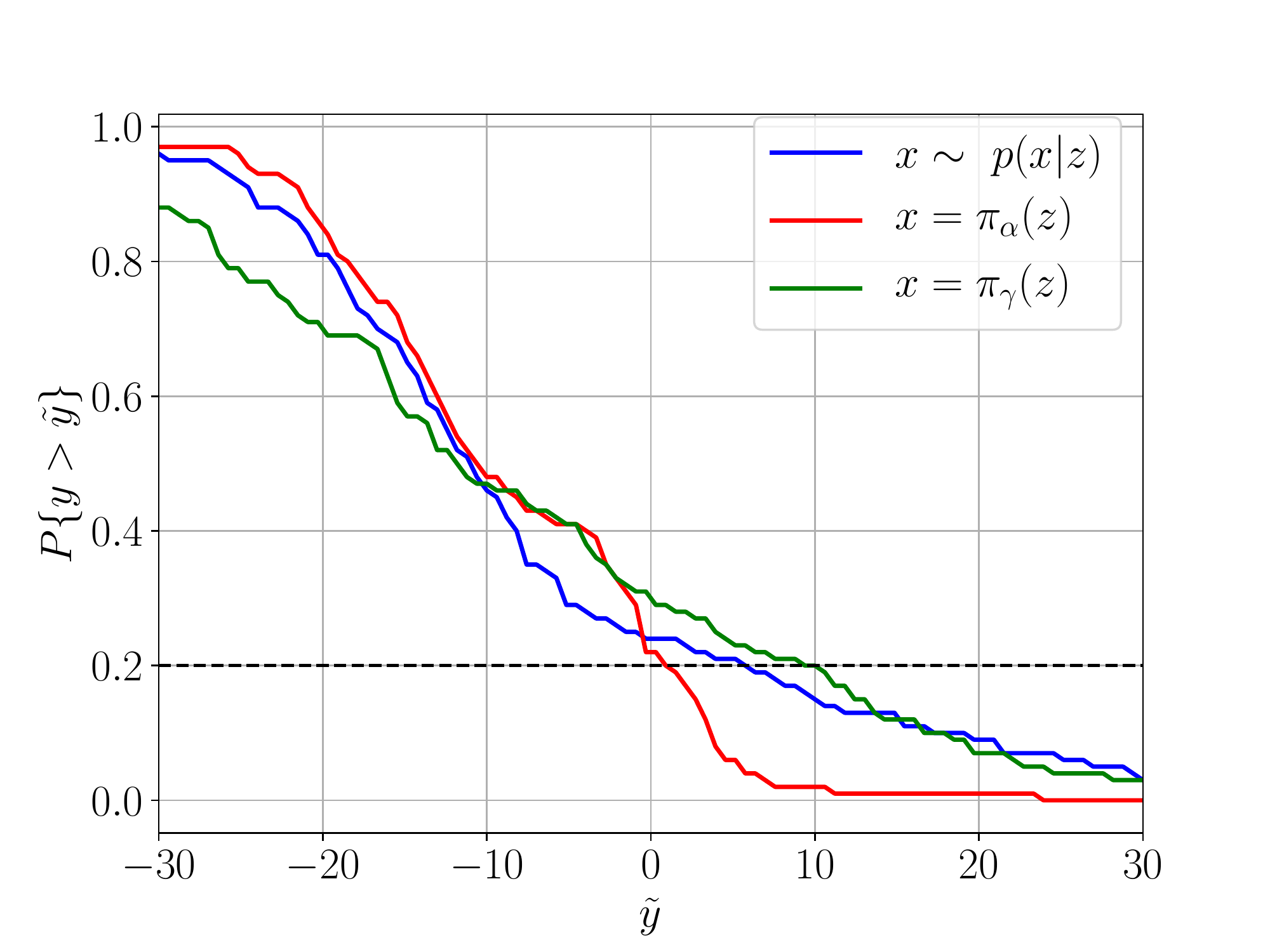}
    \caption{Complementary \cdftext{} of costs}
    \label{fig: ccdf with linear}
    \end{subfigure}
    \begin{subfigure}{0.45\linewidth}
    \includegraphics[width=0.9\linewidth]{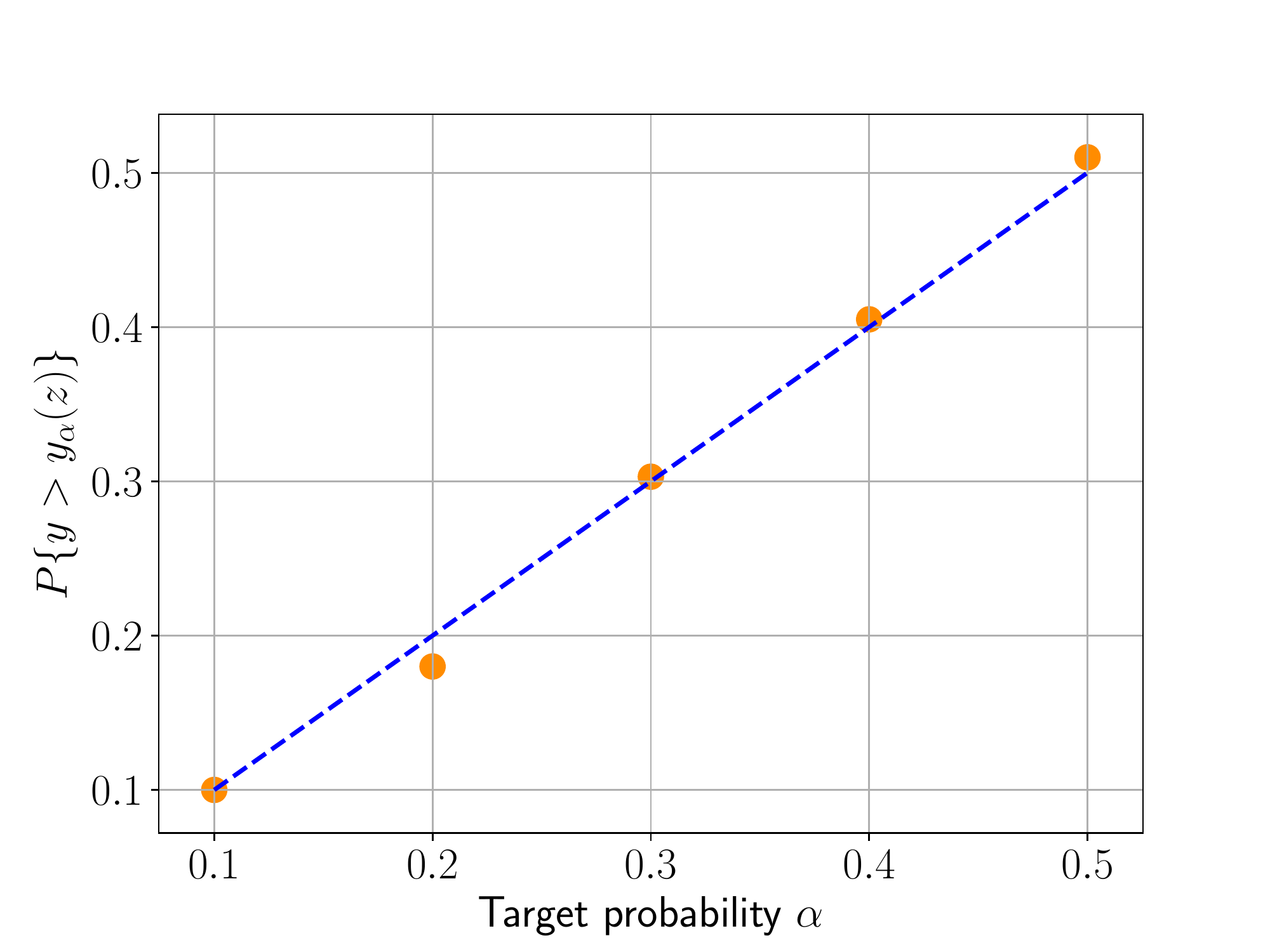}
    \caption{Probability of cost $\out$ exceeding $\out_\alpha(\cv)$}
    \label{fig:empirical cvg}
    \end{subfigure}
    \caption{(a) and (b) show a past policy $p(\ex|\cv)$ with  uneven feature overlap. Robust $\pol_\alpha(\cv)$ and mean-optimal linear policy $\pol_\gamma(\cv)$ are both learned from $\dataset_n$. The robust policy targets reducing tail costs at the $\alpha=20\%$ level. Note that the mean-optimal policy is to not treat, i.e., $\ex=0$. (c) Costs of past and learned policies along with $\alpha$-level (dashed). (d) Accuracy of the limit $\out_\alpha(\cv)$ vs. different $\alpha$-levels, using 300 Monte Carlo runs.}
    \label{fig:comp pol}
\end{figure}

Figures \ref{fig:comp pol mal} and \ref{fig:comp pol fem} show the decision $\ex$ taken by the robust and mean-optimal policy, $\pol_\alpha(\cv)$ and $\pol_{\gamma}(\cv)$, respectively, as a function of features $\cv$. Note that \eqref{eq:synthetic out} leads to a mean-optimal policy $\pol_{\gamma}(\cv)\equiv 0$, since the expected cost for the untreated group is lower than that of the treated group. By contrast, the robust policy $\pol_{\alpha}(\cv)$ takes into account that the dispersion of costs is much higher for untreated patients and therefore assigns $\ex~=~1$ to male patients in the age span 41-54 years as well as all females in the observable age span. To reduce the risk of increased blood pressure at the specified level, it therefore opts for treatments more often. This is highlighted in Figure \ref{fig: ccdf with linear} which shows the cost distribution, using the complementary \cdftext{} $\Prb^\pol\{\out>\wtilde{\out}\}$, for the different policies. We see that the robust policy safeguards against large increases in blood pressure, where the $(1-\alpha)$ quantile is smaller than that for the mean-optimal policy. Thus the robust strategy trades off a higher expected cost for a  lower tail cost at the $\alpha$-level.

An important feature of the proposed methodology is that each decision of the policy $\pol_\alpha(\cv)$ has an associated limit $\out_{\alpha}(\cv)$, such that the probability of exceeding it, $\Prb^\pol\{\out>\out_{\alpha}(\cv)\}$, is bounded by $\alpha$. Figure \ref{fig:empirical cvg} shows the estimated probability under the robust policy versus the target level $\alpha$. Despite the misspecification of the Gaussian model $\what{\pdf}(\cvsc_1|\cvsc_2, \ex)$, the target $\alpha$ provides an accurate limit for the actual probability.

\subsection{Infant Health and Development Program data}

Next, the properties of the proposed method are studied using real data. We use data from the Infant Health and Development program (IHDP) \cite{brooks1992effects}, which investigated the effect of personalized home visits and intensive high-quality child care on the health of low birth-weight and premature infants \cite{hill2011bayesian}. The data for each child included a 25-dimensional covariate vector $\wtilde{\cv}$, containing information on birth weight, head circumference, gender etc., standardized to zero mean and unit standard deviation, as well as a decision $\ex\in\{0,1\}$ indicating whether a child received special medical care or not. The outcome cost $\out$ is a child's cognitive underdevelopment score (simply a sign change of a development score).

The covariate distribution $p(\wtilde{\cv})$ is unknown. The past policy, which we also treat as unknown, was in fact a randomized control experiment, so that $p(\ex=1|\wtilde{\cv})$ was a constant. This policy was found to be successful in improving cognitive scores of the treated children as compared to those in the control group. To obtain outcome costs for either decision in $\setex$, we generate $\out$ synthetically by the nonlinear associative models following \cite{hill2011bayesian, NPCI}:
\begin{equation} 
    \out|\ex=0,\wtilde{\cv}\sim\mathcal{N}(-\exp[(\wtilde{\cv}+0.5\mbf{1})^{\T}\mbs{\beta}],~ \sigma_{0})\quad \text{and} \quad \out|\ex=1,\wtilde{\cv}\sim\mathcal{N}(-\wtilde{\cv}^{\T}\mbs{\beta}-\omega,~ \sigma_{1}), 
\label{eq:ihdp out}
\end{equation}
where we consider different dispersions below. Here $\omega$ is selected as described in \cite{NPCI} and \cite{hill2011bayesian} so that the effect of treatment on the treated is $2$. The unknown parameter $\mbs{\beta}$ is a 25-dimensional vector of coefficients drawn randomly from $\{0,~0.1,~0.2,~0.3,~0.4\}$ with probabilities $\{0.6, 0.1, 0.1, 0.1, 0.1\}$, respectively, as specified in \cite{hill2011bayesian}.  The IHDP data contains $747$ data points and we randomly select a subset of $n=600$ training points that form $\dataset_n$. The remaining $147$ points are used to evaluate learned policies.

To learn the weights \eqref{eq:weights} for the robust policy, we first reduce the 25-dimensional covariates $\wtilde{\cv}$ into 4-dimensional features $\cv=\texttt{enc}(\wtilde{\cv})$ using an autoencoder \cite[\text{sec}.7.1]{bengio2013representation}. Then $\widehat{\pdf}(\cv|\ex)$ is a learned Gaussian mixture model with four mixture components and $\widehat{\pdf}(\ex)$ is a learned Bernoulli model. Together the models define \eqref{eq:weights} and a robust policy $\pol_\alpha(\cv)$  is learned for the target probability $\alpha~=~20\%$. For comparison, we also consider a linear policy $\pol_{\gamma}(\wtilde{\cv})$ that aims to minimize the expected cost \eqref{eq:standardproblem} using linear models of the conditional means. Note that a such models are well-specified and misspecified for the treated and untreated outcomes in \eqref{eq:ihdp out}, respectively.

\begin{figure}[t!]
    \centering
    \begin{subfigure}{0.45\linewidth}
    \includegraphics[width=0.9\linewidth]{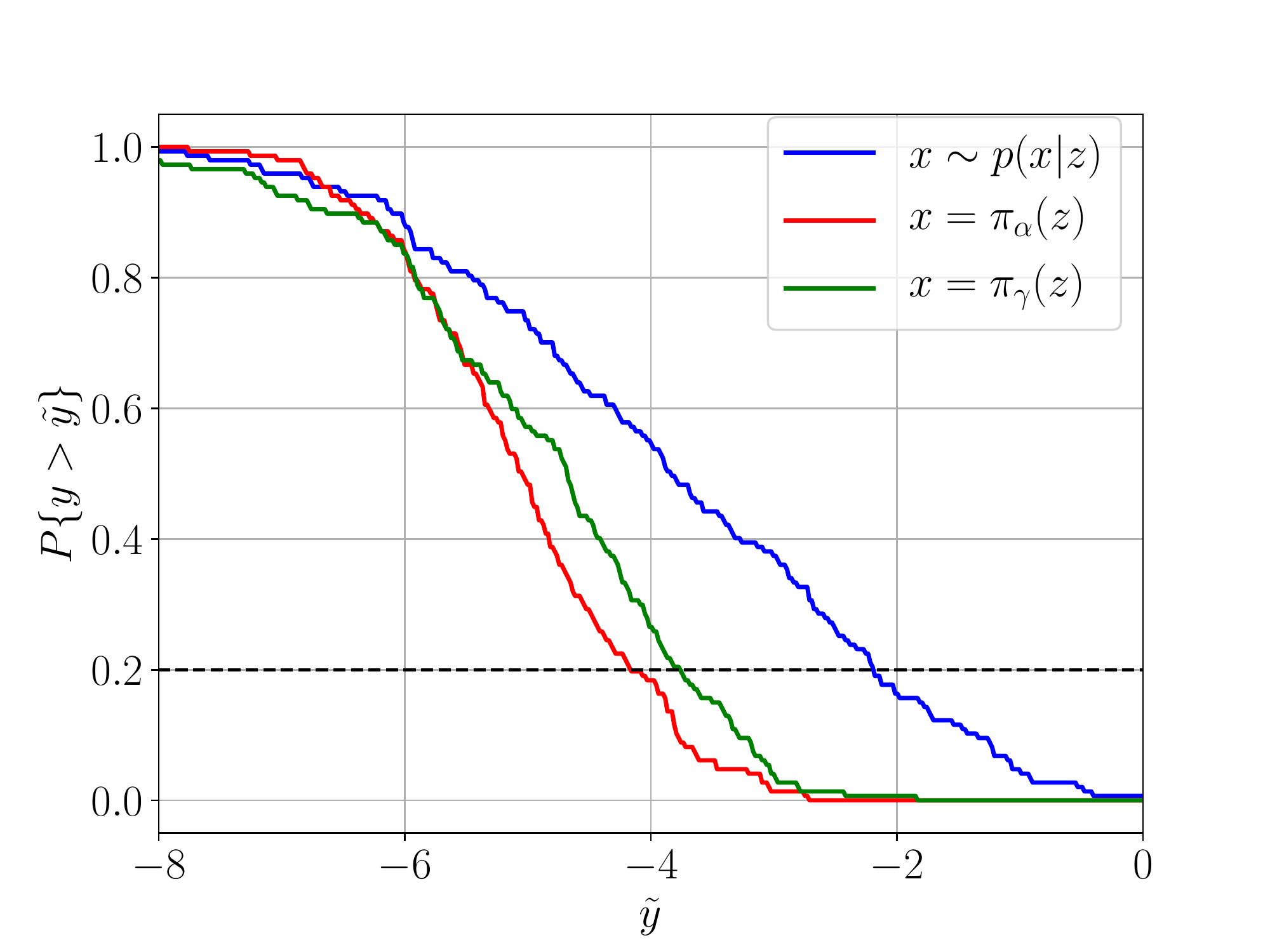}
    \caption{Complementary \cdftext{}}
    \label{fig:ihdp case 0}
    \end{subfigure}
    \begin{subfigure}{0.45\linewidth}
    \includegraphics[width=0.9\linewidth]{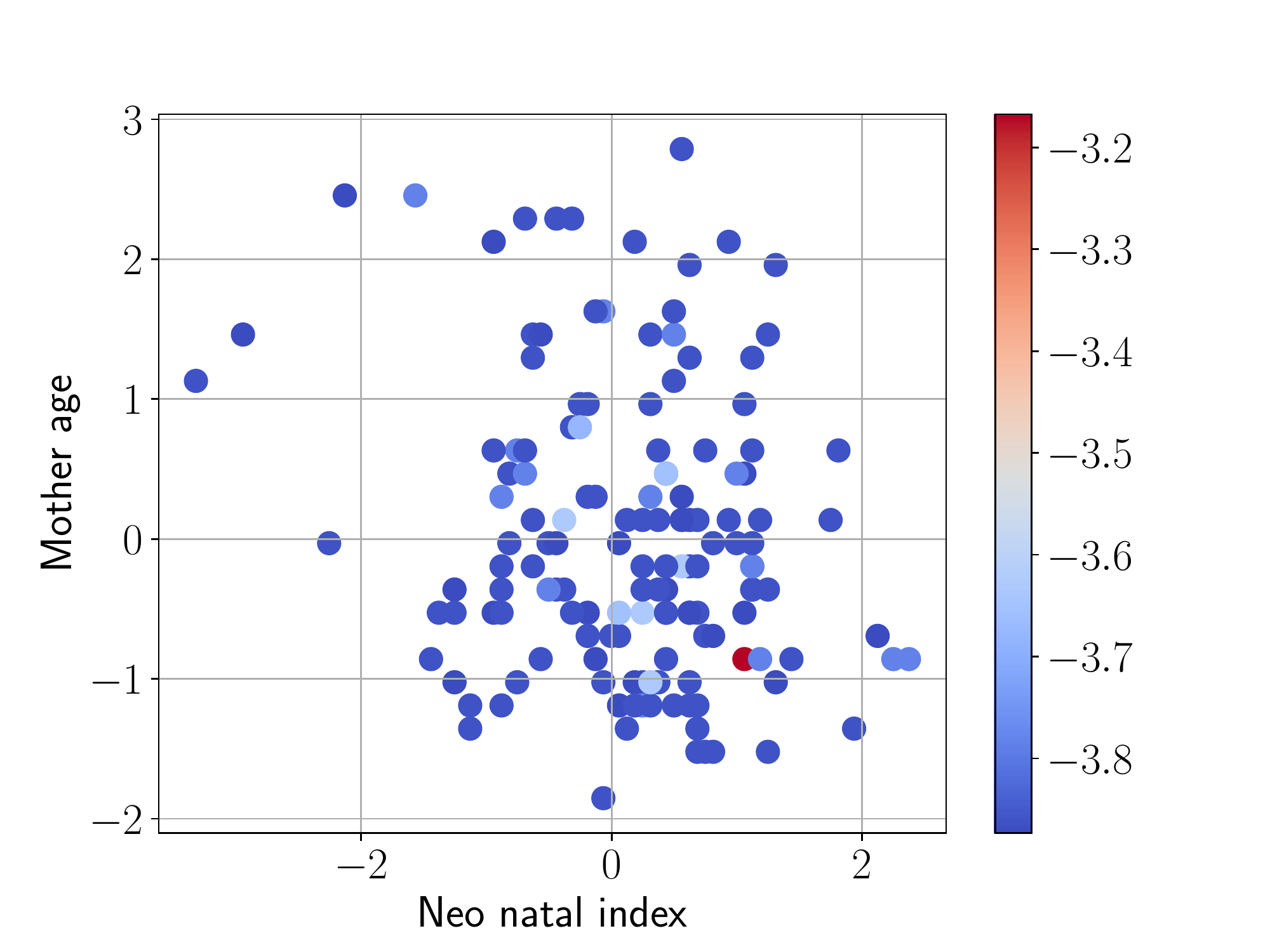}
    \caption{}
    \label{fig: y_alpha ihdp case 0}
    \end{subfigure}
    \begin{subfigure}{0.45\linewidth}
    \includegraphics[width=0.9\linewidth]{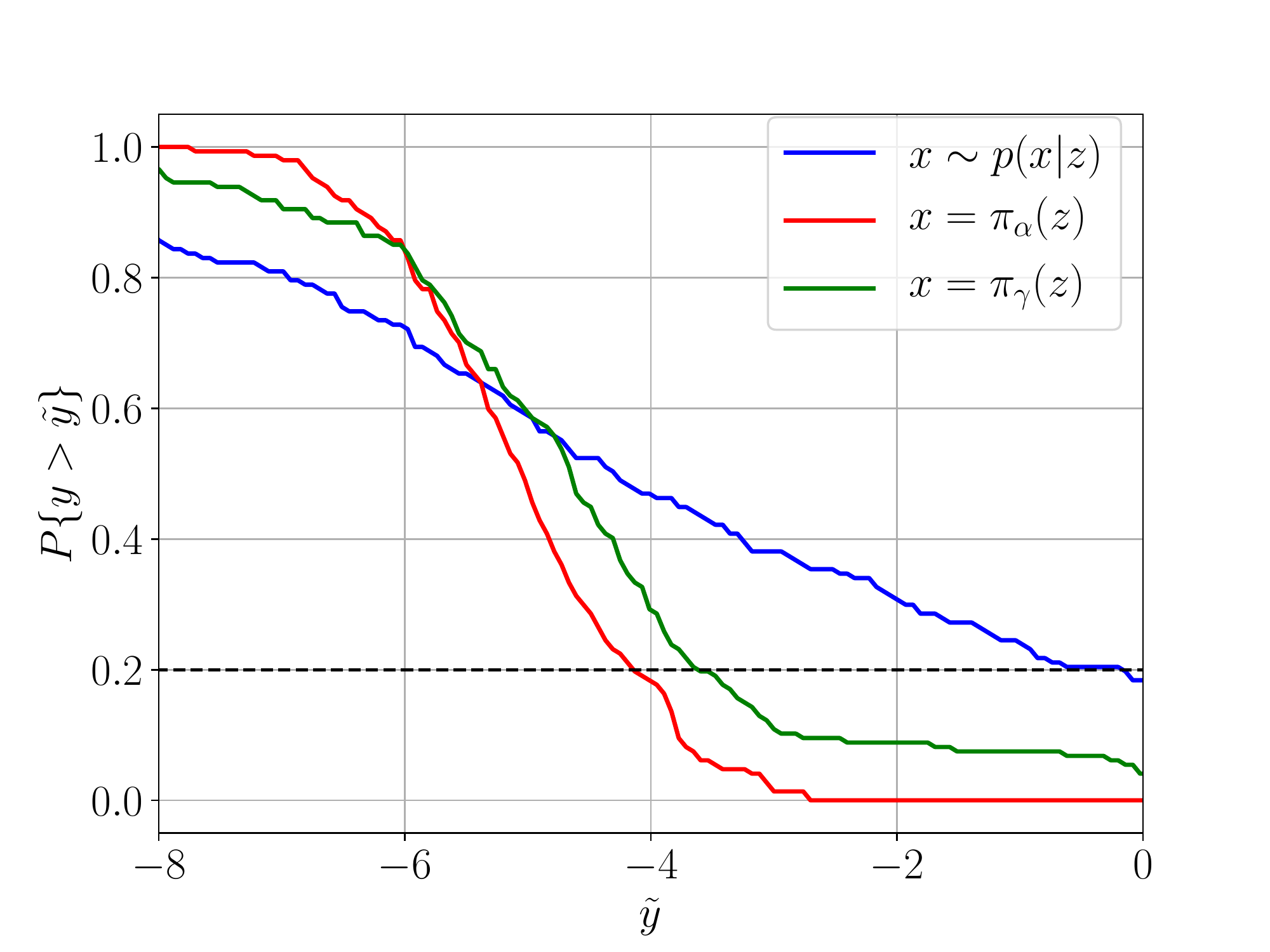}
    \caption{Complementary \cdftext{}}
    \label{fig: ihdp case 1}
    \end{subfigure} 
    \begin{subfigure}{0.45\linewidth}
    \includegraphics[width=0.9\linewidth]{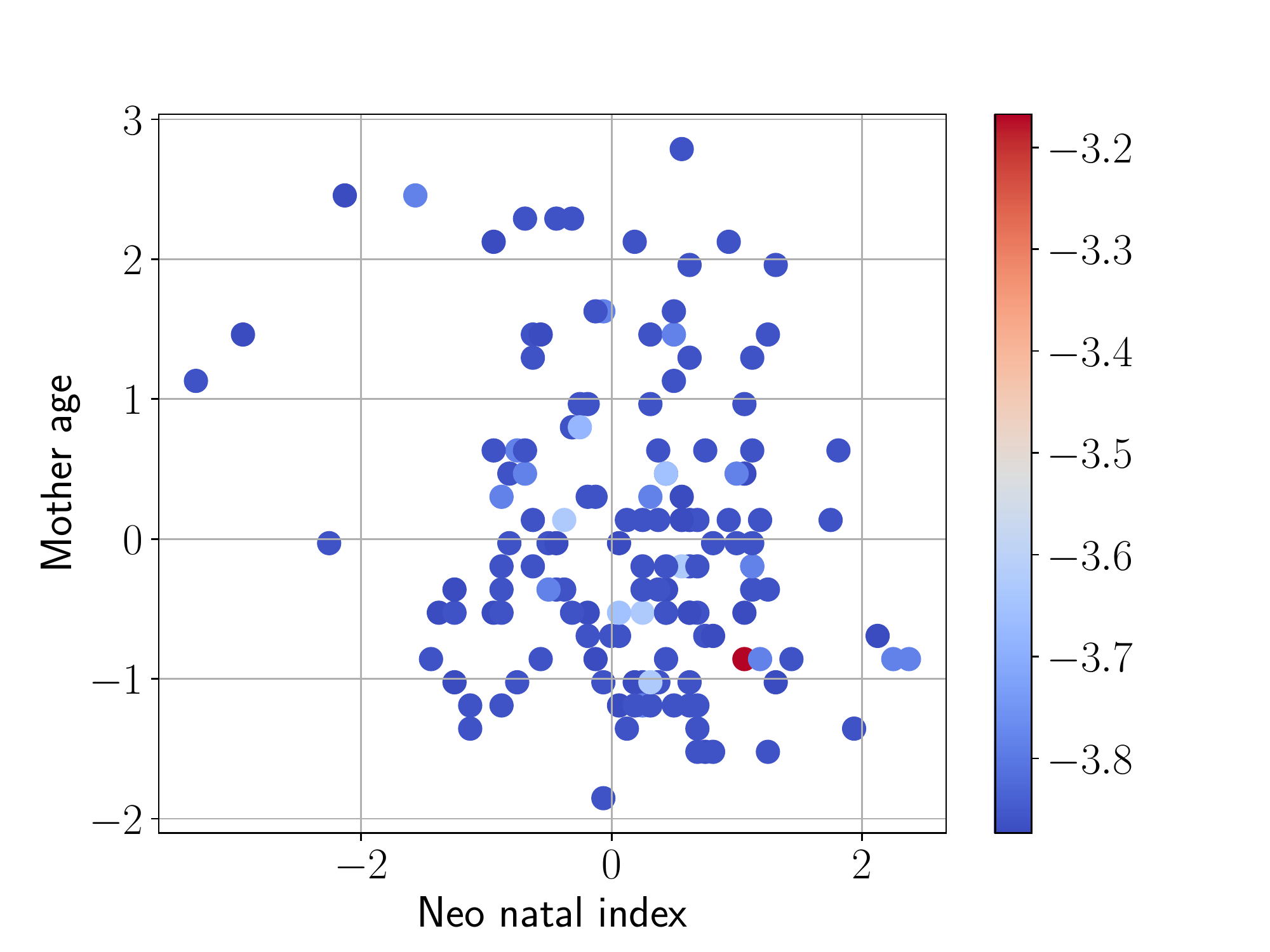}
    \caption{}
    \label{fig: y_alpha ihdp case 1}
    \end{subfigure}
    \begin{subfigure}{0.45\linewidth}
    \includegraphics[width=0.9\linewidth]{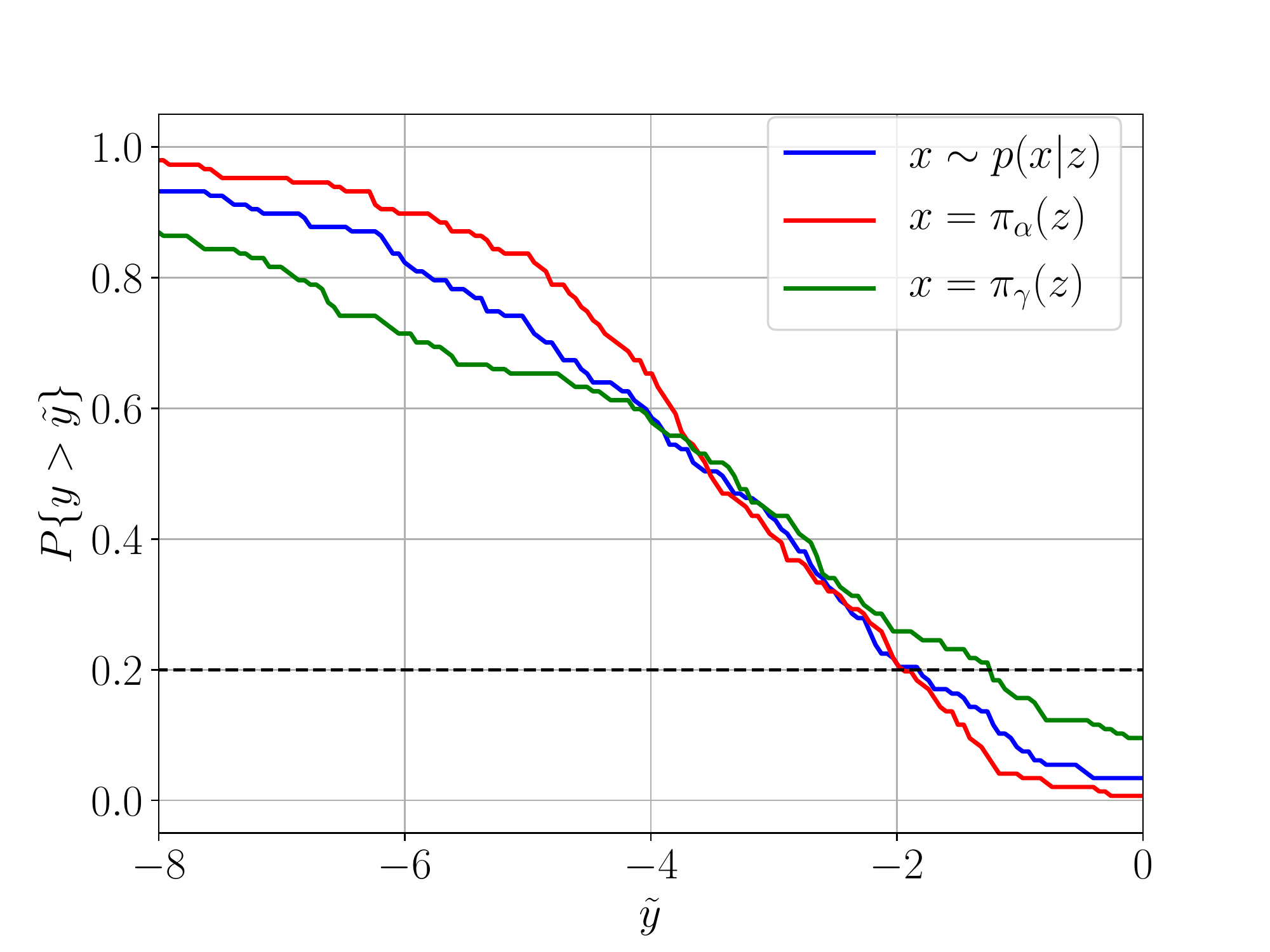}
    \caption{Complementary \cdftext{}}
    \label{fig: ihdp case 2}
    \end{subfigure}
    \begin{subfigure}{0.45\linewidth}
    \includegraphics[width=0.9\linewidth]{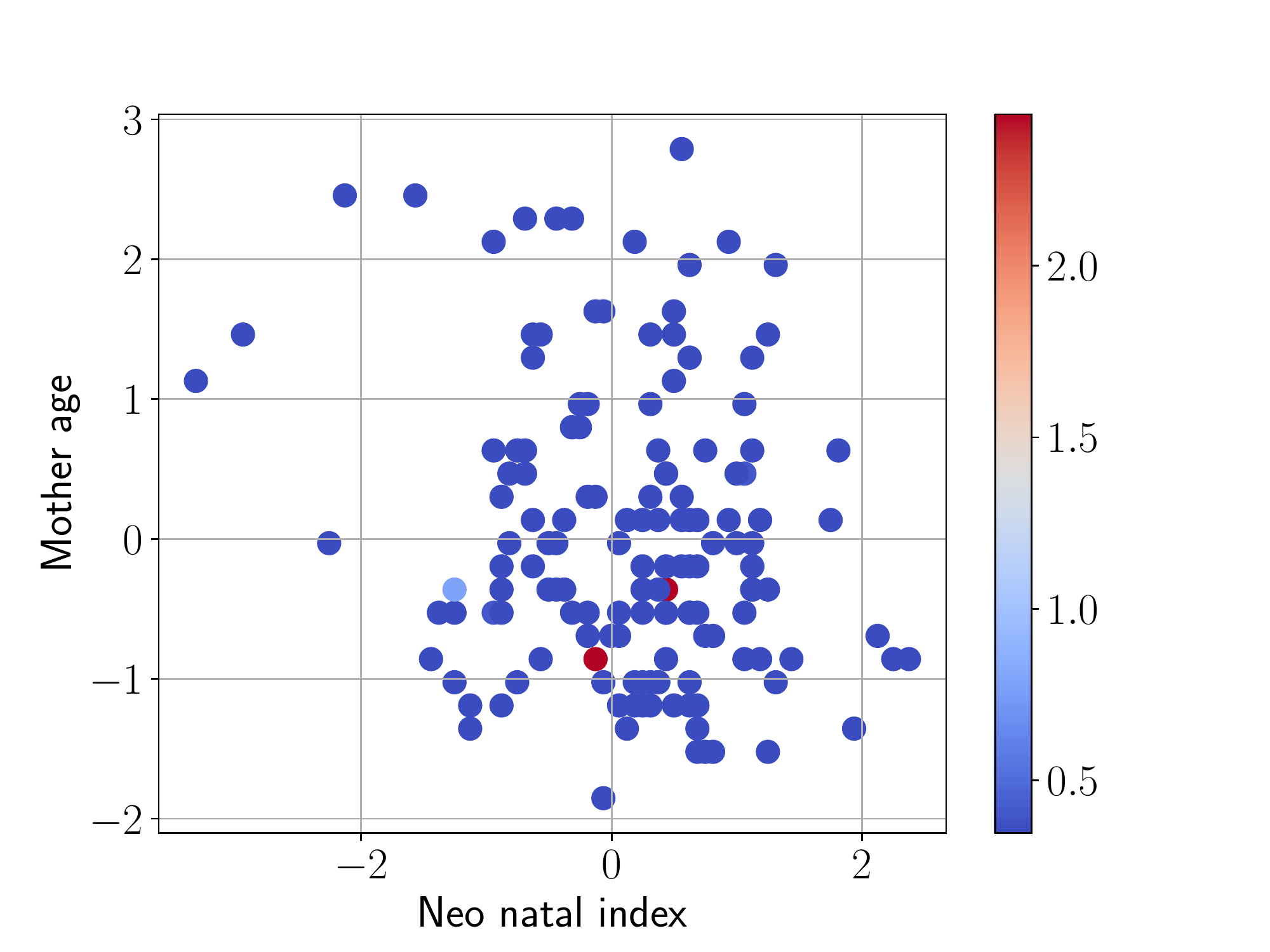}
    \caption{}
    \label{fig: y_alpha ihdp case 2}
    \end{subfigure}
    \caption{IHDP data and cognitive underdevelopment scores $\out$. First column: Complementary \textsc{cdf}s for learned robust $\pol_{\alpha}$ and linear policies, respectively, as compared to past policy. We consider three different scenarios in \eqref{eq:ihdp out}: (a)  $\sigma_{1}~=~\sigma_{0}~=~1$, (b)  $\sigma_{1}=1,~\sigma_{0}~=~5$, and (c)  $\sigma_{1}=5,~\sigma_{0}~=~1$. Second column: limit $\out_\alpha(\cv)$ (color bar) provided by the robust methodology, plotted against two standardized covariates: neonatal health index and mother's age of each child in the test data. Each unit corresponds to a standard deviation from the mean.} 
    \label{fig:ihdp}
\end{figure}

Figure~\ref{fig:ihdp} shows the cost distribution for the past and learned policies when the dispersions in \eqref{eq:ihdp out}  are equal or different. We see that in the cases of equal dispersion in Figure \ref{fig:ihdp case 0} and higher dispersion for untreated in Figure \ref{fig: ihdp case 1}, both the robust and linear policies reduce the $(1-\alpha)$ quantile of the cost $\out_{\alpha}$  as compared to that for the past policy, where the robust policy does slightly better. 

Since the treated group tends to have a lower mean cost than the untreated group in the training data, the linear policy tends to assign $\ex=1$ to most patients in the test data. Moreover, the misspecified linear model leads to biased estimates of the expected cost and the resulting policy $\pol_\gamma(\wtilde{\cv})$ cannot fully capture the non-linear partition of the feature space implied by the mean-optimal policy based on $\E[\out|\ex,\wtilde{\cv}]$. 

Figure \ref{fig: ihdp case 2} shows the cost distribution when the treatment outcome costs have higher dispersion. Given the tendency toward treatment assignment by the linear policy, this results in heavier tails for the cost distribution. By contrast, the robust policy adapts to a  higher cost dispersion in the treated group and assigns fewer treatments which results in resulting in smaller tail costs. In this case, the tail cost is similar to the past policy since its proportion of (random) treatment assignments is small in the data.

The robust methodology also provides a certificate $\out_\alpha(\cv)$ for each decision, as illustrated in Figures \ref{fig: y_alpha ihdp case 0}, \ref{fig: y_alpha ihdp case 2} and \ref{fig: y_alpha ihdp case 1} with respect to two standardized covariates for each child in the test set. The probability that the cost $\out$ exceeds $\out_{\alpha}(\cv)$ is $18.6\%$, estimated using $500$ Monte Carlo runs, which is close to and no greater than the targeted probability $\alpha=20\%$ despite the model misspecification of $\what{p}(\cv|\ex)$.

\section{Conclusion}

We have developed a method for learning decision policies from observational data that lower the tail costs of decisions at a specified level. This is relevant in safely-critical applications. By building on recent results in conformal prediction, the method also provides statistically valid bound on the cost of each decision. These properties are valid under finite samples and even in scenarios with highly uneven overlap between features for different decisions in the observed data. Using both real and synthetic data, we illustrated the statistical properties and performance of the proposed method.

\section{Broader Impact}

We believe the work presented herein can provide a useful tool for decision support, especially in safety-critical applications where it is of interest to reduce the risk of incurring high costs. The methodology can leverage large and heterogeneous data on past decisions, contexts and outcomes, to improve human decision making, while providing an interpretable statistical guarantee for its recommendations. It is important, however, to consider the population from which the training data is obtained and used. If the method is deployed in a setting with a different population it may indeed fail to provide cost-reducing decisions. Moreover, if there are categories of features that are sensitive and subject to unwarranted biases, the population may need to be split into appropriate subpopulations or else the biases can be reproduced in the learned policies.

\end{document}